\theoremstyle{plain}
\newtheorem{lemma}{Lemma}
\newtheorem{corollary}{Corollary}
\providecommand{\Paren}[1]{\ensuremath{\left( #1 \right)}}
\providecommand{\defeq}[0]{\stackrel{\text{def}}{=}}
\providecommand{\cP}{\mathcal{P}}
\providecommand{\EE}[2]{\ensuremath{\mathbb{E}_{#1}\left[ #2 \right]}}
\DeclareMathOperator{\softmin}{soft-min}
\DeclareMathOperator{\softmax}{soft-max}
\DeclareMathOperator{\softplus}{soft-plus}
\def\tr{^\top}
\begin{document}

\title{Fully Variational Noise-Contrastive Estimation}

\author{%
  Christopher Zach  \\
  Chalmers University of Technology\\
  Gothenburg, Sweden \\
  \texttt{zach@chalmers.se}
}

\maketitle

\begin{abstract}
  By using the underlying theory of proper scoring rules, we design a family
  of noise-contrastive estimation (NCE) methods that are tractable for latent
  variable models. Both terms in the underlying NCE loss, the one using data
  samples and the one using noise samples, can be lower-bounded as in
  variational Bayes, therefore we call this family of losses \emph{fully
    variational noise-contrastive estimation}. Variational autoencoders are a
  particular example in this family and therefore can be also understood as
  separating real data from synthetic samples using an appropriate
  classification loss. We further discuss other instances in this family of
  fully variational NCE objectives and indicate differences in their empirical
  behavior.
\end{abstract}

\section{Introduction}

Estimating the parameters of a model distribution from a training set is an
important research topic with applications in deep generative models
(e.g.~\cite{goodfellow2014generative,mirza2014conditional,rezende2015variational,kingma2018glow,song2020score,dhariwal2021diffusion}),
out-of-distribution (OOD) or anomaly
detection~\cite{zenati2018adversarially,ren2019likelihood,kirichenko2020normalizing,liu2020energy}
and representation
learning~\cite{dayan1995helmholtz,radford2015unsupervised,chen2016infogan,oord2018representation}.
Maximum-likelihood estimation is the method of choice when the parametric
model distribution is normalized and can be evaluated efficiently (which is
the case for ``elementary'' probability distributions and for normalizing
flows~\cite{rezende2015variational}). The expressiveness of a model
distribution can be enhanced by introducing latent variables and by using an
unnormalized distribution (also known as energy-based model). Both of these
modifications prevent the maximum likelihood method from being applicable:
latent variables often lead to intractable integrals or sums when computing
the marginal likelihood, and likewise the normalization factor (also called
the partition function) of an unnormalized model is typically intractable.

Latent variables are usually addressed by utilizing the evidence lower bound
(ELBO) of the likelihood as in variational
Bayes~(e.g.~\cite{jordan1999introduction}), and parameters of unnormalized
models can be estimated from data by methods such as score
matching~\cite{hyvarinen2005estimation} or noise-contrastive estimation
(NCE,~\cite{gutmann2010noise,gutmann2012noise}). NCE can intuitively be
understood as learning a binary classifier separating training data from
samples drawn from a fully known noise distribution. Variational
NCE~\cite{rhodes2019variational} aims to enable the estimation of unnormalized
latent variable models from data by leverging the ELBO. It succeeds only
partially, since the ELBO cannot be applied on all terms in the NCE objective,
and an intractable marginal remains. In this work we derive modified instances
of NCE that allow the application of the ELBO on all terms, and the resulting
objective is therefore free from intractable sums (or integrals). We call the
resulting method \emph{fully variational noise-contrastive
  estimation}. Interestingly, variational
autoencoders~\cite{kingma2014auto,rezende2014stochastic} are one particular
(and important) instance in this family of fully variational NCE methods.

\section{Background}

\paragraph{Proper scoring rules}
Let $\cP\subseteq \mathbb{R}^d$, and let $G:\cP \to \mathbb{R}$ be a
differentiable convex mapping. The Bregman divergence between
$p\in\cP$ and $q\in\cP$ is defined as
\begin{align}
  D_G(p \| q) \defeq G(p) - \big( G(q) + (p-q)\tr \nabla G(q) \big),
\end{align}
i.e.\ $D_G(p \| q)$ is the error between $G(p)$ and the linearization
(first-order Taylor expansion) of $G$ at $q$. Convexity of $G$ implies that
$D_G(p \| q)$ is non-negative. If $G$ is strictly convex, then
$D_G(p \| q)=0$ iff $p=q$.

Now let $p$ and $q$ be the parameters of a categorical distribution, i.e.\
$P(X=k|p)=p_k$ and $P(X=k|q)=q_k$ for a categorical random variable
$X$ with values in $\{1,\dotsc,d\}$. The domain $\cP$ is therefore the
probability simplex, $\cP = \{ p\in[0,1]^d: \sum_{k=1^d} p_k = 1\}$. In this
setting $D_G(p \| q)$ can be stated as
\begin{align}
  D_G(p \| q) &= G(p) - G(q) + \EE{X\sim p}{\tfrac{d}{dq_X} G(q)} - \EE{X\sim q}{\tfrac{d}{dq_X} G(q)} \nonumber \\
  {} &= G(p) + \EE{X\sim p}{\tfrac{d}{dq_X} G(q) - G(q) - \EE{X'\sim q}{\tfrac{d}{dq_{X'}} G(q)}}.
\end{align}
Minimizing $D_G(p \| q)$ w.r.t.\ $q$ for fixed $p$ is equivalent to
\begin{align}
  \arg\min_{q\in\cP} D_G(p \| q) &= \arg\min_{q\in\cP} -G(q) - \sum\nolimits_k (p_k-q_k) \tfrac{\partial}{\partial q_k} G(q) \nonumber \\
  {} &= \arg\max_{q \in \cP} \EE{X\sim p}{\tfrac{\partial}{\partial q_X} G(q) + G(q) - \sum\nolimits_k q_k \tfrac{\partial}{\partial q_k} G(q) } \nonumber\\
  {} &= \arg\max_{q \in \cP} \EE{X\sim p}{ S(X, q) },
\end{align}
where we defined the \emph{proper scoring rule} (PSR) $S$ as follows,
\begin{align}
  S(x,q) \defeq \tfrac{\partial}{\partial q_x} G(q) + G(q) - \sum\nolimits_k q_k \tfrac{\partial}{\partial q_k} G(q).
  \label{eq:PSR}
\end{align}
Note that maximization w.r.t.\ $q$ only requires samples from $p$, but does
not need the knowledge of the distrbution $p$ itself. Therefore proper scoring
rules are one method to estimate distribution parameters when only samples
from an unknown data distribution $p$ are available.

If $G$ is strictly convex, then the resulting PSR is a \emph{strictly} PSR.
If e.g.\ $G$ is chosen as the negated Shannon entropy, then
$S(x,q)=\log q_x$ is called the \emph{logarithmic} scoring rule underlying
maximum likelihood estimation and the cross-entropy loss in machine
learning. It is an instance of a \emph{local} PSR~\cite{parry2012proper},
which does not depend on any value of $q_{x'}$ for $x'\ne x$ (the score
matching cost~\cite{hyvarinen2005estimation} being another example). We refer
to~\cite{gneiting2007strictly} and~\cite{dawid2014theory} for an extensive
overview and further examples of proper scoring rules.

\paragraph{PSRs for binary RVs}
When $X$ is a binary random variable, and therefore $x\in\{0,1\}$, then we
only need one parameter $\mu\in[0,1]$ to characterize the corresponding
Bernoulli distribution. For a differentiable convex function
$G:[0,1]\to\mathbb{R}$ the induced Bregman divergence between
$\mu\in[0,1]$ and $\nu\in[0,1]$ is given by
\begin{align}
  D_G(\mu\|\nu) = G(\mu) - G(\nu) - (\mu-\nu) G'(\nu)
\end{align}
and
\begin{align}
  \arg\min_{\nu\in[0,1]} D_G(\mu\|\nu) &= \arg\max_{\nu\in[0,1]} G(\nu) + (\mu-\nu) G'(\nu) \nonumber \\
  {} &= \arg\max_{\nu\in[0,1]} \EE{x\sim\text{Ber}(\mu)}{G(\nu) + (x-\nu) G'(\nu)}.
\end{align}
The resulting PSR $S$ is therefore
\begin{align}
  \label{eq:bin_PSR}
  S(1,\nu) &= G(\nu) + (1-\nu) G'(\nu) & S(0,1\!-\!\nu) &= G(\nu) - \nu G'(\nu).
\end{align}
$G$ can be recovered via
\begin{align}
  G(\nu) = \nu S(1,\nu) + (1\!-\!\nu) S(0,1\!-\!\nu) = \EE{x\sim\text{Ber}(\mu)}{S(x,x\nu + (1\!-\!x)(1\!-\!\nu))}.
  \label{eq:G_from_S}
\end{align}

\paragraph{Noise-contrastive estimation}
Noise-contrastive estimation (NCE,~\cite{gutmann2010noise,gutmann2012noise})
ultimately casts the estimation of parameters of an unknown data distribution
as a binary classification problem. Let
$\Omega \subseteq \mathbb{R}^n$ and $X$ be a $n$-dimensional random vector.
Let $p_d$ the (unknown) data distrbution, $p_\theta$ a model distribution
(with parameters $\theta$) and $p_n$ a user-specified noise distribution. Let
$Z$ be a (fair) Bernoulli RV that determines whether a sample is drawn from
the data (respectively model) distribution or from the noise distribution
$p_d$.\footnote{We omit the possibility of using general Bernoulli RV for
  notational simplicity.} NCE applies the logarithmic PSR to match the
posteriors,
\begin{align}
    P_{d,n}(Z=1 | X=x) &= \frac{p_d(x)}{p_d(x) \!+\! p_n(x)}
    & P_{\theta,n}(Z=1 | X=x) &= \frac{p_\theta(x)}{p_\theta(x) \!+\! p_n(x)},
\end{align}
which yields the NCE objective
\begin{align}
  J_{\text{NCE}}(\theta) = \EE{X\sim p_d}{\log \frac{p_\theta(X)}{p_\theta(X) + p_n(X)}}
  + \EE{X\sim p_n}{\log \frac{p_n(X)}{p_\theta(X) + p_n(X)}}.
  \label{eq:NCE}
\end{align}
After introducing $r_\theta(x) \defeq p_\theta(x)/p_n(x)$ this reads as
\begin{align}
  J_{\text{NCE}}(\theta) = \EE{X\sim p_d}{-\log \Paren{1 + r_\theta(X)^{-1}}} + \EE{X\sim p_n}{-\log \big( 1+r_\theta(X) \big)},
  \label{eq:NCE_r}
\end{align}
establishing the connection to logistic regression.
At first glance this is superficially similar to
GANs~\cite{goodfellow2014generative}, but it lacks e.g.\ the problematic
min-max structure of GANs.
In contrast to e.g.\ maximum likelihood estimation, NCE is applicable even
when the model distribution is unnormalized, i.e.
\begin{align}
  p_\theta(x) = \tfrac{1}{Z(\theta)} p_\theta^0(x)
\end{align}
for an unnormalized model $p_\theta^0(x)$ and an intractable partition
function $Z(\theta) = \sum_x p_\theta^0(x)$.\footnote{For brevity we use sums
  to refer to marginalization of RV, but these sums should always be
  understood as the appropriate Lebesque integrals.}
NCE allows to estimate the value of the partition function $Z(\theta)$ for the
obtained model parameters $\theta$ by augmenting the parameter vector to
$(\theta,Z)$ and use the relation $p_\theta(x) = p_\theta^0(x)/Z$. Extensions
to the basic NCE framework are discussed in~\cite{pihlaja2010family} and
\cite{ceylan2018conditional}.

NCE is not directly applicable to latent variable models, where the joint
density $p_\theta(X,Z)$ is specified, but the induced marginal
$p_\theta(X)$ is only indirectly given via
\begin{align}
  p_\theta(x) = \sum\nolimits_z p_\theta(x,z) = \sum\nolimits_z p_\theta(x|z) p_Z(z),
\end{align}
where we use a generative model for the joint $p_\theta(X,Z)$.

Using latent variable models greatly enhances the expressiveness of model
distributions, but exact computation of the marginal $p_\theta(x)$ is often
intractable. By noting that the term under the first expectation in
Eq.~\ref{eq:NCE_r} is concave w.r.t.\ $r_\theta(x)$, Variational
NCE~\cite{rhodes2019variational} proposes to apply the evidence lower bound
(ELBO) to obtain a tractable variational lower bound for the first term in
Eq.~\ref{eq:NCE_r}. Unfortunately, the second term in Eq.~\ref{eq:NCE_r} is
convex in $r_\theta$ and the ELBO does not apply here. Importance sampling is
leveraged instead to estimate the intractable expectation inside the second
term. In the following section we show how the ELBO can be applied on both
terms in a slightly generalized version of NCE.

\section{Fully Variational NCE}

First, we generalize the NCE objective (Eq.~\ref{eq:NCE}) to arbitrary
strictly proper scoring rules for binary random variables,
\begin{align}
  J_{S\text{-NCE}}(\theta) &= \EE{x\sim p_d}{S\Paren{1, \tfrac{r_\theta(x)}{1 + r_\theta(x)}}} + \EE{x\sim p_n}{S\Paren{0, \tfrac{1}{1+r_\theta(x)} }},
  \label{eq:S_NCE}
\end{align}
where $r_\theta$ is the density ratio,
$r_\theta(x) \defeq p_\theta(x)/p_n(x)$.
$J_{S\text{-NCE}}$ is maximized w.r.t.\ the parameters $\theta$ in this
formulation. Recall that $r_\theta(x)/(1+r_\theta(x))$ is the posterior of
$x$ being a sample drawn from the model $p_\theta$, and
$1/(1+r_\theta(x))$ is the posterior for $x$ being a noise sample. Our aim is
to determine a convex function $G$ such that both mappings
\begin{align}
  \label{eq:f_and_S_relation}
  f_1(r) = S(1, r/(1+r)) & & \text{and} & & f_0(r) = S(0, 1/(1+r))
\end{align}
are concave. If this is the case, then
\begin{align}
  f_k\big( r_\theta(x) \big) &= f_k\Paren{ \frac{p_\theta(x)}{p_n(x)} } = f_k\Paren{ \frac{\sum_z p_\theta(x,z)}{p_n(x)} }
                               = f_k\Paren{ \frac{\sum_z p_\theta(x,z) q_k(z|x)}{p_n(x)q_k(z|x)} } \nonumber \\
  {} &\ge \sum\nolimits_z q_k(z|x) f_k\Paren{ \frac{p_\theta(x,z)}{p_n(x)q_k(z|x)} }
       = \EE{z\sim q_k(Z|x)}{f_k\Paren{ \frac{p_\theta(x,z)}{p_n(x)q_k(z|x)}}} \nonumber
\end{align}
for $k\in\{0,1\}$. $q_k(Z|X)$ is a posterior corresponding to the encoder
part. Overall, $J_{S\text{-NCE}}$ in Eq.~\ref{eq:S_NCE} can be lower bounded
as follows,
\begin{align}
  J_{S\text{-NCE}}(\theta) \!=\! \EE{x\sim p_d}{f_1(r_\theta(x))} \!+\! \EE{x\sim p_n}{f_0(r_\theta(x))}
  \ge \max_{q_1,q_0} J_{S\text{-fvNCE}}(\theta,\!q_1,\!q_0)
  \label{eq:double_ELBO}
\end{align}
with the r.h.s.\ defined as the \emph{fully variational} NCE loss,
\begin{align}
  J_{S\text{-fvNCE}}(\theta,q_1,q_0) \defeq \EE{x\sim p_d,z\sim q_1(Z|x)}{f_1\Paren{ \frac{p_\theta(x,z)}{p_n(x)q_1(z|x)}}}
  + \EE{x\sim p_n,z\sim q_0(Z|x)}{f_0\Paren{ \frac{p_\theta(x,z)}{p_n(x)q_0(z|x)}}}.
  \label{eq:fvNCE}
\end{align}
Note that we allow in principle two separate encoders, $q_1$ and
$q_0$, since the ELBO is applied at two places independently. For brevity we
introduce the following short-hand notations for the joint distributions,
\begin{align}
  p_{d,k}(x,z) \defeq p_d(x) q_k(z|x) & & p_{n,k}(x,z) \defeq p_n(x) q_k(z|x),
\end{align}
resulting in a more compact expression for $J_{S\text{-fvNCE}}$,
\begin{align}
  J_{S\text{-fvNCE}}(\theta,\!q_1,\!q_0) = \EE{(x,z)\sim p_{d,1}}{f_1\!\Paren{ \frac{p_\theta(x,z)}{p_{n,1}(x,z)}}}
  \!+\! \EE{(x,z)\sim p_{n,0}}{f_0\!\Paren{ \frac{p_\theta(x,z)}{p_{n,0}(x,z)}}}\!.
\end{align}
From $p_\theta(x)p_\theta(z|x) = p_\theta(x,z)$ we deduce that the lower bound
is tight, i.e.\
$J_{S\text{-NCE}}(\theta)=\max_{q_1,q_0} J_{S\text{-fvNCE}}(\theta,q_1,q_0)$
when the encoders $q_1$ and $q_0$ are equal to the model posterior,
$q_1(Z|X)=q_0(Z|X) = p_\theta(Z|X)$ a.e. $J_{S\text{-fvNCE}}$ in
Eq.~\ref{eq:fvNCE} is formulated as a population loss, but the corresponding
empirical risk can be immediately obtained by sampling from $p_d$,
$p_n$ and the encoder distributions.

Now the question is whether such concave mappings $f_1$ and $f_0$ satisfying
Eq.~\ref{eq:f_and_S_relation} for a PSR $S$ exist. Since common PSRs such as
the logarithmic and the quadratic PSR violate these properties,
existence of such a PSR is not obvious. The next section discusses how to
construct such PSRs and provides examples.

\section{A Family of Suitable Proper Scoring Rules}

In this section we construct a pair $(f_1,f_0)$ of concave mappings, such that
the induced functions $S(1,\cdot)$ and $S(0,\cdot)$ in
Eq.~\ref{eq:f_and_S_relation} form a PSR. The following result provides
sufficient conditions on such a pair $(f_1,f_0)$:
\begin{lemma}
  Let a pair of functions $(f_0,f_1)$,
  $f_k\!:\!(0,\infty) \to \mathbb{R}$, satisfy the following:
  \begin{enumerate}
  \item Both $f_1$ and $f_0$ are concave,
  \item $f_1$ and $f_0$ satisfy the compatibility condition
    \begin{align}
      f_0'(r) = -rf_1'(r)
      \label{eq:f_cond}
    \end{align}
    for all $r>0$,
  \item the mapping
    $G(\mu) = \mu f_1(\mu/(1-\mu)) + (1-\mu) f_0(\mu/(1-\mu))$ is convex in
    $(0,1)$.
  \end{enumerate}
  Then $S$ is a PSR. Such pairs $(f_1,f_0)$ are said to have to \emph{double
    ELBO} property.
\end{lemma}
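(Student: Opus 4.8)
The plan is to identify the convex generator behind $S$ and then invoke the background characterization: a binary scoring rule generated from a differentiable convex $G$ via Eq.~\ref{eq:bin_PSR} is proper (this is the Bregman-divergence derivation in the Background). The natural candidate is precisely the $G$ of Condition~3. In fact, under the change of variables $\mu \mapsto r := \mu/(1-\mu)$, which is a smooth increasing bijection from $(0,1)$ onto $(0,\infty)$ with $1-\mu = 1/(1+r)$, the relations $f_1(r)=S(1,r/(1+r))$ and $f_0(r)=S(0,1/(1+r))$ of Eq.~\ref{eq:f_and_S_relation} read $f_1(r)=S(1,\mu)$ and $f_0(r)=S(0,1-\mu)$, so Condition~3 defines $G$ to be exactly what Eq.~\ref{eq:G_from_S} would recover from such an $S$. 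It then remains to check that this $G$ regenerates $S$ through Eq.~\ref{eq:bin_PSR}, and this is where Condition~2 does the work.

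Concretely, I would differentiate $G(\mu) = \mu f_1(r) + (1-\mu) f_0(r)$ with $r=\mu/(1-\mu)$ and $r'(\mu) = 1/(1-\mu)^2$. The product and chain rules give
\begin{align*}
  G'(\mu) = f_1(r) - f_0(r) + r'(\mu)\Paren{ \mu f_1'(r) + (1-\mu) f_0'(r) }.
\end{align*}
The compatibility condition~\ref{eq:f_cond} says $f_0'(r) = -r f_1'(r) = -\tfrac{\mu}{1-\mu}\,f_1'(r)$, so $(1-\mu)f_0'(r) = -\mu f_1'(r)$ and the parenthesized term vanishes identically, leaving the clean identity $G'(\mu) = f_1(r) - f_0(r)$.

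Substituting $G(\mu)$ and $G'(\mu)$ into the PSR formulas of Eq.~\ref{eq:bin_PSR} then makes the $f_0$-terms cancel in the first case and the $f_1$-terms cancel in the second, yielding $S(1,\mu) = G(\mu) + (1-\mu)G'(\mu) = f_1(r)$ and $S(0,1-\mu) = G(\mu) - \mu G'(\mu) = f_0(r)$ — exactly the relations of Eq.~\ref{eq:f_and_S_relation} under $\mu = r/(1+r)$. Hence $S$ is the binary scoring rule generated by $G$; since $G$ is convex by Condition~3 and differentiable on $(0,1)$ (inherited from the differentiability of $f_0,f_1$ implicit in Condition~2), $S$ is a PSR, which is the claim.

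I expect the only genuine content to be the derivative computation together with the observation that Condition~2 is precisely what annihilates the extra term — without it no single differentiable $G$ could regenerate both $S(1,\cdot)$ and $S(0,\cdot)$; everything else is bookkeeping. Note that Condition~1 (concavity of $f_0,f_1$) is not used for properness here; it is the ingredient that validates the variational lower bound of Eq.~\ref{eq:double_ELBO}, and is listed only to pin down the full \emph{double ELBO} property. The one point to handle with a little care is the domain matching: $\mu\in(0,1)$ corresponds bijectively to $r\in(0,\infty)$, exactly the domain on which $f_0,f_1$ and their derivatives are assumed to be defined, so no endpoint issues arise.
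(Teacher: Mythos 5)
Your proposal is correct and is essentially the paper's argument: both hinge on differentiating $G(\mu)=\mu f_1(\mu/(1-\mu))+(1-\mu)f_0(\mu/(1-\mu))$ under the change of variables $r=\mu/(1-\mu)$, using the compatibility condition \eqref{eq:f_cond} to cancel the derivative cross-term so that the relations of a binary PSR generated by $G$ (Eq.~\ref{eq:bin_PSR}/\ref{eq:PSR_1}) hold, and then invoking convexity of $G$ for properness. You merely run the computation in the verification direction (checking that the $G$ of Condition~3 regenerates $S$) rather than deriving \eqref{eq:f_cond} from the PSR relations as the paper does, and your remark that concavity of $(f_1,f_0)$ is not needed for properness but only for the ELBO step is accurate.
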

\begin{proof}
  We abbreviate $S_1(\mu):=S(1,\mu)$ and
  $S_0(1\!-\!\mu):=S(0,1\!-\!\mu)$ and recall the relations between
  $S$ and $G$:
  \begin{align}
    \begin{split}
      G(\mu)&=\mu S_1(\mu) + (1\!-\!\mu) S_0(1\!-\!\mu) \\
      S_0(1\!-\!\mu) &= G(\mu)-\mu G'(\mu) \\
      S_1(\mu) &= G(\mu) + (1\!-\!\mu) G'(\mu) = S_0(1\!-\!\mu) + G'(\mu)
    \end{split}
    \label{eq:PSR_1}
  \end{align}
  and therefore $G'(\mu) = S_1(\mu)-S_0(1\!-\!\mu)$. We calculate
  \begin{align}
    G'(\mu) = S_1(\mu) - S_0(1\!-\!\mu) + \mu S_1'(\mu) - (1\!-\!\mu) S_0'(1\!-\!\mu)
  \end{align}
  Combining these relations implies that
  \begin{align}
    \mu S_1'(\mu) - (1\!-\!\mu) S_0'(1\!-\!\mu) = 0 \iff S_0'(1-\mu) = \tfrac{\mu}{1-\mu} \cdot S_1'(\mu)
  \end{align}
  Now the relation between $\mu$ and $r$ is
  $\mu = r/(1+r)$ and therefore $r = \mu/(1-\mu)$,
  which we use to express $(f_1,f_0)$ in terms of $(S_1,S_0)$,
  \begin{align}
    f_1(r) = S_1(\mu) = S_1(r/(1+r)) & & f_0(r) = S_0(1\!-\!\mu) = S_0(1/(1+r)).
  \end{align}
  Using $d\mu/dr = (1+r)^{-2}$ and
  \begin{align}
    f_1'(r) = \tfrac{1}{(1+r)^2} S_1'(\mu) & & f_0'(r) = -\tfrac{1}{(1+r)^2} S_0'(1-\mu) \nonumber,
  \end{align}
  the condition can be restated as
  \begin{align}
    -(1+r)^2 f_0'(r) = r \cdot (1+r)^2 f_1'(r) \iff f_0'(r) = -r f_1'(r),
  \end{align}
  which is the second requirement on $(f_1,f_0)$. Now if $(f_1,f_0)$ satisfy
  Eq.~\ref{eq:f_cond}, then $(S_1,S_0)$ satisfy the relations of a binary PSR
  in Eq.~\ref{eq:PSR_1} for an induced function $G$. If $G$ is now convex,
  then $(S_1,S_0)$ is a PSR. \qed
\end{proof}
One consequence of the condition in Eq.~\ref{eq:f_cond} is, that $f_1$ is
increasing and $f_0$ is decreasing or vice versa. This further implies that
$S$ cannot be symmetric, i.e.
\begin{align}
  S(1,\mu) \ne S(0,1\!-\!\mu),
\end{align}
and positive and negative samples are penalized differently in the overall
loss. This is in contrast to many well-known PSR, which are symmetric (such as
the logarithmic PSR used in NCE). The condition also implies that
\begin{align}
  f_0''(r) =  -f_1'(r) - r f_1''(r) \stackrel{!}\le 0 \nonumber.
\end{align}
Since $f_1$ is concave and $r\ge 0$, $-r f_1''(r)\ge 0$. This has to be
compensated by $f_1'$ increasing sufficiently fast with $r$. Since
$f_1'(r) \ge -rf_1''(r) \ge 0$, $f_1$ is increasing and $f_0$ is decreasing in
$\mathbb{R}_{\ge 0}$. This observation yields some intuition on
$J_{S\text{-fvNCE}}$ in Eq.~\ref{eq:fvNCE}: the first term aims to align
$p_\theta$ with $p_{d,1}$ by maximizing $p_\theta(x,z)/p_{n,1}(x,z)$ for real
data (and its code), whereas the second term favors mis-alignment between
$p_\theta$ and $p_{n,0}$ for noise samples (by minimizing the likelihood ratio
$p_\theta(x,z)/p_{n,0}(x,z)$).

Eq.~\ref{eq:f_cond} immediately allows to establish one pair
$(f_1,f_0)$ satisfying the double ELBO property: we choose
$f_1(r) = \log r$, which yields $f_0'(r)=-1$ and therefore
$f_0(r)=-r$. Both $f_1$ and $f_0$ are concave. Further,
\begin{align}
  S_1(\mu) = \log \tfrac{\mu}{1-\mu} & &  S_0(1\!-\!\mu) = -\tfrac{\mu}{1-\mu}
\end{align}
and therefore
\begin{align}
  G(\mu) &= \mu S_1(\mu) + (1\!-\!\mu) S_0(1\!-\!\mu) = \mu \Paren{ \log \tfrac{\mu}{1-\mu} - 1 },
\end{align}
which is convex in $(0,1)$. Thus, we have established the existence of one PSR
allowing the ELBO being applied on both terms as in
Eq.~\ref{eq:double_ELBO}. This example can be generalized to the following
parametrized family of PSRs:
\begin{lemma}
  \label{lem:family1}
  A family of PSRs satisfying the double ELBO property is given by
  \begin{align}
    f_1(r) = \log(r + \beta) & & f_0(r) = \beta \log(r+\beta) - r
  \end{align}
  for any $\beta \ge 0$,.
\end{lemma}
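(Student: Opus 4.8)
The plan is to verify the three hypotheses of the preceding lemma for the pair $f_1(r)=\log(r+\beta)$, $f_0(r)=\beta\log(r+\beta)-r$; throughout $r>0$ and $\beta\ge 0$, so $r+\beta>0$ and every expression below is well defined.

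\emph{Conditions 1 and 2 are routine differentiation.} We get $f_1'(r)=\tfrac{1}{r+\beta}$ and $f_1''(r)=-\tfrac{1}{(r+\beta)^2}<0$, while $f_0'(r)=\tfrac{\beta}{r+\beta}-1=-\tfrac{r}{r+\beta}$ and $f_0''(r)=-\tfrac{\beta}{(r+\beta)^2}\le 0$ (with equality, and $f_0$ affine, iff $\beta=0$). Hence both $f_k$ are concave, and $f_0'(r)=-\tfrac{r}{r+\beta}=-r\,f_1'(r)$ is exactly the compatibility condition Eq.~\ref{eq:f_cond}.

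\emph{Condition 3 is the only real work.} I would first simplify $G'$: differentiating $G(\mu)=\mu f_1(r)+(1-\mu)f_0(r)$ with $r=\mu/(1-\mu)$ and $\tfrac{dr}{d\mu}=(1-\mu)^{-2}$ gives
\begin{align}
  G'(\mu)=f_1(r)-f_0(r)+\tfrac{1}{(1-\mu)^2}\bigl(\mu f_1'(r)+(1-\mu)f_0'(r)\bigr),
\end{align}
and the bracketed term vanishes by condition 2, since $(1-\mu)f_0'(r)=-r(1-\mu)f_1'(r)=-\mu f_1'(r)$. Therefore $G'(\mu)=f_1(r)-f_0(r)=(1-\beta)\log(r+\beta)+r$. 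Setting $t:=1-\mu\in(0,1)$ and $w:=t\,(r+\beta)=1-(1-\beta)(1-\mu)=\mu+\beta(1-\mu)>0$, a second differentiation (using $\tfrac{d}{d\mu}=-\tfrac{d}{dt}$ after writing $r+\beta=(1-(1-\beta)t)/t$) yields
\begin{align}
  G''(\mu)=\frac{(1-\beta)^2}{w}+\frac{1-\beta}{t}+\frac{1}{t^2},
\end{align}
and multiplying through by $t^2 w>0$ collapses the right-hand side to the constant $1$. Hence $G''(\mu)=\tfrac{1}{t^2 w}>0$ on $(0,1)$, so $G$ is (strictly) convex; all three hypotheses hold, and the previous lemma gives that the induced $S$ is a (strictly) PSR with the double ELBO property.

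\emph{Main obstacle.} Everything is elementary once condition 2 is used to reduce $G'$ to a two-term expression; the only delicate point is the sign of $G''$ when $\beta>1$, where the middle term $\tfrac{1-\beta}{t}$ is negative. This is resolved entirely by the algebraic identity $t^2 w\,G''(\mu)\equiv 1$, which makes positivity manifest once the denominators are cleared, and which requires no case analysis in $\beta$.
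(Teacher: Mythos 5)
Your proof is correct and takes essentially the same approach as the paper: you verify the three hypotheses of the preceding lemma and arrive at exactly the paper's expression $G''(\mu)=\tfrac{1}{(1-\mu)^2(\mu+\beta(1-\mu))}>0$, with your substitution $t=1-\mu$, $w=\mu+\beta(1-\mu)$ simply supplying the intermediate computation the paper omits. The only difference is cosmetic: the paper derives $f_0$ from the compatibility condition rather than checking it, while you verify concavity and compatibility explicitly.
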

\begin{proof}
  This follows from
  \begin{align}
    f_0'(r) = -r f_1'(r) = -\tfrac{r}{r+\beta} = - \tfrac{r+\beta-\beta}{r+\beta} = -1 + \tfrac{\beta}{r+\beta}
    \implies f_0(r) = \beta\log (r+\beta) - r \nonumber.
  \end{align}
  Further, $G''$ can be calculated as
  \begin{align}
    G''(\mu) = -\frac{1}{(1-\mu)^2 (\beta\mu - \mu - \beta)} = \frac{1}{(1-\mu)^2 (\mu + \beta(1-\mu))} > 0,
  \end{align}
  which establishes the convexity of $G$ (due to $(1-\mu)^2>0$ and
  $\mu+\beta(1-\mu)>0$ for $\mu\in(0,1)$ and $\beta\ge 0$). \qed
\end{proof}
A 2-parameter family of PSRs is given next.
\begin{lemma}
  \label{lem:family2}
  For $\alpha \in (0,1]$ and $\beta\ge 0$ we choose
  \begin{align}
    f_1(r) &= \tfrac{1}{\alpha} (r+\beta)^\alpha & & f_0(r) = -\tfrac{1}{\alpha+1} (r+\beta)^{\alpha+1} \nonumber.
  \end{align}
  This pair induces a strictly PSR satisfying the double ELBO property.
\end{lemma}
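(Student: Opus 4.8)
The plan is to verify the three sufficient conditions from the first lemma of this section (the one characterizing the double ELBO property), and to additionally check a strictness condition for the ``strictly PSR'' claim. Concretely I need: (i) $f_1$ and $f_0$ concave; (ii) the compatibility relation $f_0'(r) = -r f_1'(r)$ for all $r>0$; and (iii) convexity — in fact \emph{strict} convexity — of $G(\mu) = \mu f_1(\mu/(1-\mu)) + (1-\mu) f_0(\mu/(1-\mu))$ on $(0,1)$. The argument closely parallels the proof of Lemma~\ref{lem:family1}, with the logarithm replaced by a power function; as there, $f_1$ is chosen freely and $f_0$ is its companion, obtained by integrating the compatibility relation (the additive constant being immaterial).

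First I would record the one-variable derivatives. From $f_1(r) = \tfrac{1}{\alpha}(r+\beta)^\alpha$ we get $f_1'(r) = (r+\beta)^{\alpha-1}$ and $f_1''(r) = (\alpha-1)(r+\beta)^{\alpha-2}$; since $\alpha\in(0,1]$ and $r+\beta>0$ whenever $r>0,\ \beta\ge0$, the second derivative is $\le 0$, so $f_1$ is concave (merely affine in the edge case $\alpha=1$). Then I would check condition~(ii), the compatibility relation $f_0'(r) = -r f_1'(r)$, which is precisely what pins down $f_0$ from $f_1$. Granting (ii) I obtain the second derivative of $f_0$ for free, $f_0''(r) = \tfrac{d}{dr}\big(-r f_1'(r)\big) = -f_1'(r) - r f_1''(r)$, and substituting the expressions above and extracting the positive factor $(r+\beta)^{\alpha-2}$ leaves $-(\beta+\alpha r)$, strictly negative for $r>0$. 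Hence $f_0$ is strictly concave, settling (i) and (ii).

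For (iii) I would not differentiate the explicit form of $G$ twice, but instead use the relations in Eq.~\ref{eq:PSR_1}: writing $r(\mu) := \mu/(1-\mu)$, with $r'(\mu) = (1-\mu)^{-2}$, one has $G'(\mu) = S_1(\mu) - S_0(1-\mu) = f_1(r(\mu)) - f_0(r(\mu))$. Differentiating once more and invoking the compatibility relation $f_0'(r) = -r f_1'(r)$ together with $1+r = (1-\mu)^{-1}$ gives
\begin{align}
  G''(\mu) = \big(f_1'(r) - f_0'(r)\big)\,r'(\mu) = (1+r)\,f_1'(r)\,(1-\mu)^{-2} = \frac{f_1'(r)}{(1-\mu)^3}.
\end{align}
Since $f_1'(r) = (r+\beta)^{\alpha-1}>0$ for $r>0,\ \beta\ge0$ and $(1-\mu)^3>0$ on $(0,1)$, we conclude $G''(\mu)>0$, so $G$ is strictly convex and the induced $S$ is a strictly proper scoring rule with the double ELBO property. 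Substituting $r=\mu/(1-\mu)$ back in recovers a closed form for $G''$ of the same shape as in Lemma~\ref{lem:family1}, which serves as a consistency check.

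I do not expect a genuine obstacle here — the content is a short computation — so the main thing to watch is bookkeeping: keeping track of the admissible domain so that the real-exponent powers $(r+\beta)^{\alpha-1}$ and $(r+\beta)^{\alpha-2}$ are well defined and positive (which needs $r+\beta>0$, i.e.\ $r>0$ with $\beta\ge0$), and handling the boundary exponent $\alpha=1$, where $f_1$ is only affine but $f_0$ and $G$ remain strictly concave/convex. The identity $G'(\mu)=f_1(r)-f_0(r)$ is what keeps step (iii) from turning into a messy second-derivative calculation.
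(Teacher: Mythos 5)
Your proposal follows the same overall strategy as the paper (verify the three hypotheses of the double-ELBO lemma), but your handling of condition (iii) is a genuinely different and cleaner route. The paper differentiates the explicit $G$ twice and exhibits $G''(\mu)$ as a product of manifestly positive factors; you instead use the identity $G'(\mu)=f_1(r)-f_0(r)$, which is valid once Eq.~\ref{eq:f_cond} holds because the extra term $\bigl(\mu f_1'(r)+(1-\mu)f_0'(r)\bigr)\,r'(\mu)$ vanishes, and then $G''(\mu)=\bigl(f_1'(r)-f_0'(r)\bigr)r'(\mu)=(1+r)f_1'(r)/(1-\mu)^2=f_1'(r)/(1-\mu)^3>0$. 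This buys strict convexity (hence strict propriety) essentially for free, and it is more general than the paper's case-specific algebra: it shows that any compatible pair with $f_1'>0$ yields a strictly convex $G$.

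The one step you defer (``granting (ii)'') is, however, exactly the delicate one. For the closed form stated in the lemma, $f_0(r)=-\tfrac{1}{\alpha+1}(r+\beta)^{\alpha+1}$ has $f_0'(r)=-(r+\beta)^{\alpha}=-(r+\beta)f_1'(r)$, which equals $-r f_1'(r)$ only when $\beta=0$; integrating the compatibility relation, as you describe, gives instead the companion $f_0(r)=-\tfrac{1}{\alpha+1}(r+\beta)^{\alpha+1}+\tfrac{\beta}{\alpha}(r+\beta)^{\alpha}$, which is the choice consistent with Lemma~\ref{lem:family1} in the limit $\alpha\to 0^+$. The paper's own proof contains the same slip (it asserts $f_0'(r)=-r(r+\beta)^{\alpha-1}$ for the stated $f_0$), so this does not put you behind the paper; but carrying out the check you announced would have caught it, and it also explains why your $G''(\mu)=(\mu+\beta(1-\mu))^{\alpha-1}/(1-\mu)^{\alpha+2}$ differs from the paper's printed $G''$ precisely when $\beta>0$ and $\alpha<1$ (the paper's formula is computed from the stated $f_0$, yours from the compatible one). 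For $\beta=0$, and for the corrected $f_0$ in general, your argument is complete; your concavity computations ($f_1''=(\alpha-1)(r+\beta)^{\alpha-2}\le 0$ and $f_0''=-(r+\beta)^{\alpha-2}(\alpha r+\beta)<0$) are correct.
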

\begin{proof}
  Both $f_1$ and $f_0$ are clearly concave. We deduce
  \begin{align}
    f_1'(r) = (r+\beta)^{\alpha-1} & & f_0'(r) = -r(r+\beta)^{\alpha-1} = -r f_1'(r),
  \end{align}
  hence $(f_1,f_0)$ satisfy the condition in Eq.~\ref{eq:f_cond}.
  $G''(\mu)$ can be calculated as
  \begin{align}
    G''(\mu) &= \Paren{\frac{\mu + \beta(1-\mu)}{1-\mu}}^\alpha \cdot \frac{\mu + \beta (2-\alpha)(1-\mu)}{(1-\mu)^2 (\mu + \beta(1-\mu))^2}.
  \end{align}
  The first factor is positive for $\alpha\in(0,1]$, $\beta\ge 0$ and
  $\mu\in(0,1)$. Analogously, the second factor is positive since the
  numerator is positive for the allowed values of
  $(\mu,\alpha,\beta)$, and the denominator is a product of squares. \qed
\end{proof}
Since
\begin{align}
  \lim_{\alpha\to 0^+} f_0'(r; \alpha, \beta) = -1 \implies \lim_{\alpha\to 0^+} f_1'(r; \alpha, \beta) = (r+\beta)^{-1},
\end{align}
we deduce that the limit $\alpha\to 0^+$ yields the pair $(f_1,f_0)$ from
Lemma~\ref{lem:family1} (up to constants independent of $r$).

\begin{figure}[tb]
  \centering
  \subfigure[$S_1$]{\includegraphics[width=0.48\textwidth]{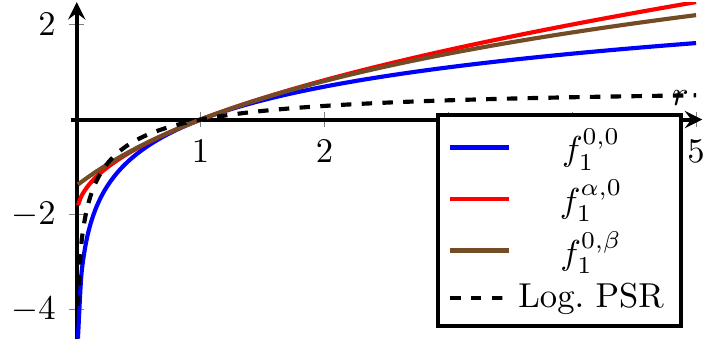}}
  \subfigure[$S_0$]{\includegraphics[width=0.48\textwidth]{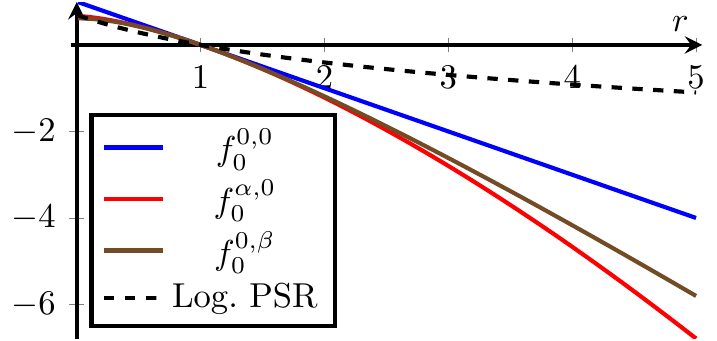}}
  \caption{Several pairs $(f_1,f_0)$, in particular
    $(f_1^{0,0},f_0^{0,0})$, $(f_1^{\alpha,0},f_0^{\alpha,0})$ and
    $(f_1^{0,\beta},f_0^{0,\beta})$ for $\alpha=1/2$ and $\beta=1$ (solid
    curves). Both $f_1$ and $f_0$ are concave functions. The pair
    $(f_1,f_0)$ induced by the logarithmic PSR is shown for reference (dashed
    curve, which is concave in (a), but convex in (b)). }
  \label{fig:f_pairs}
\end{figure}

For visualization purposes it is convenient to normalize $f_1$ and
$f_0$ such that $f_1(1)=f_0(1)=0$ and $f_1'(1)=1$ (and therefore
$f_0'(1)=-1$). With such normalization the above pairs are given by
\begin{align}
  \begin{split}
    f_1(r; \alpha,\beta) &
    = \tfrac{(1+\beta)^{1-\alpha}}{\alpha} \big( (r+\beta)^\alpha - (1+\beta)^\alpha \big) \\
    f_0(r; \alpha,\beta) &= -\tfrac{(1+\beta)^{1-\alpha}}{\alpha(\alpha+1)}
    \big( (\alpha r-\beta) (r+\beta)^\alpha - (\alpha-\beta) (1+\beta)^\alpha \big).
  \end{split}
\end{align}
Few instances of $(f_1^{\alpha,\beta},f_0^{\alpha,\beta})$ are depicted in
Fig.~\ref{fig:f_pairs}. We further introduce the fully variational NCE loss
parametrized by $(\alpha,\beta)$,
\begin{align}
  J^{\alpha,\beta}_{\text{fvNCE}}(\theta,q_1,q_0) &\defeq \EE{(x,z)\sim p_{d,1}}{f_1\Paren{ \frac{p_\theta(x,z)}{p_{n,1}(x,z)}; \alpha,\beta}}
                                                    + \EE{(x,z)\sim p_{n,0}}{f_0\Paren{ \frac{p_\theta(x,z)}{p_{n,0}(x,z)}; \alpha,\beta}}.
\end{align}
We would like to get a better understanding of these PSRs in terms of losses
used for binary classification. Recall that
\begin{align}
  r = \tfrac{p}{q} & & \mu = \tfrac{p}{p+q} = \tfrac{1}{1+1/r} = \tfrac{r}{1+r} = \sigma(\Delta)
  & & r = \tfrac{\mu}{1-\mu} = \tfrac{\sigma(\Delta)}{1-\sigma(\Delta)} = \tfrac{\sigma(\Delta)}{\sigma(-\Delta)} = \exp(\Delta) \nonumber.
\end{align}
Here $\Delta$ is the logit of the binary classifier. We minimize a
classification loss, hence we consider the negated PSRs. Thus, we obtain for
the logarithmic PSR,
\begin{align}
  -\log(\mu) &= -\log(\sigma(\Delta)) = \log(1+\exp(-\Delta)) = \softplus(-\Delta) \nonumber \\
  -\log(1-\mu) &= -\log(1-\sigma(\Delta)) = -\log(\sigma(-\Delta)) = \softplus(\Delta) \nonumber,
\end{align}
where $\softplus(u)\defeq \log(1+e^u)$. Inserting
$f_1(r)=\log(r+\beta)$ and $f_0(r)=\beta\log(r+\beta) - r$ yields
\begin{align}
  -f_1(r) &= -\!\log(r\!+\!\beta) = -\log(e^\Delta\!+\!\beta) \doteq -\!\softmax(\Delta,\log\beta) = \softmin(-\Delta, -\!\log\beta) \nonumber \\
  -f_0(r) &= r-\beta\log(r\!+\!\beta) = e^\Delta + \beta \softmin(-\Delta, -\!\log\beta) \nonumber
\end{align}
Finally, $f_1(r)=r^\alpha/\alpha$, $f_0(r)=-r^{\alpha+1}/(\alpha+1)$ results in
\begin{align}
  -f_1(r) &= -\tfrac{1}{\alpha} r^\alpha = -\tfrac{1}{\alpha} e^{\alpha\Delta}
  & -f_0(r) &= \tfrac{1}{\alpha+1} r^{\alpha+1} = \tfrac{1}{\alpha+1} e^{(\alpha+1)\Delta} \nonumber.
\end{align}
Graphically, the difference between the logistic classification loss and the
double-ELBO losses is, that the logistic loss solely penalizes incorreect
predictions and the double ELBO losses strongly favor true positives instead
(as shown in Fig.~\ref{fig:losses}).

\begin{figure}[tb]
  \centering
  \subfigure[Class 1]{\includegraphics[width=0.48\textwidth]{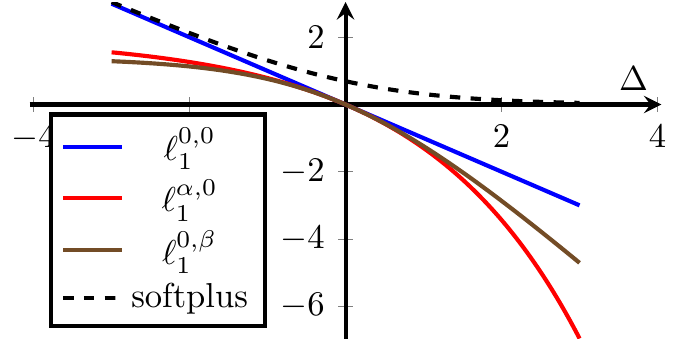}}
  \subfigure[Class 0]{\includegraphics[width=0.48\textwidth]{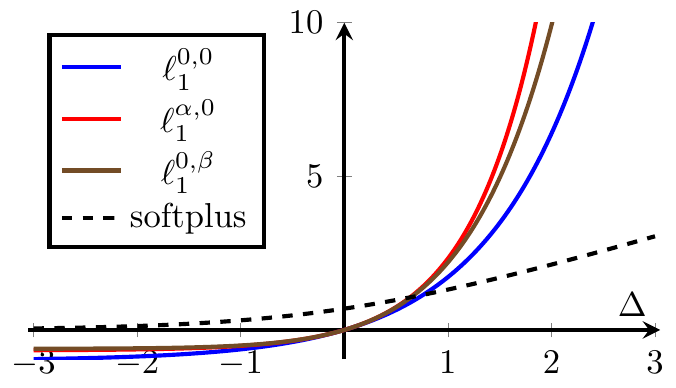}}
  \caption{The PSRs from Fig.~\ref{fig:f_pairs} reinterpreted as binary
    classification losses in terms of log-ratios $\Delta=\log r$. The
    soft-plus loss corresponds to the logarithmic PSR. }
  \label{fig:losses}
\end{figure}

We conclude this section by a noting that non-negative linear combinations of
double ELBO pairs have the double ELBO property as well:
\begin{corollary}
  \label{cor:closedness}
  The set of pairs with the double ELBO property is a convex cone.
\end{corollary}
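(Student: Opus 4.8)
The plan is to verify directly that the three defining conditions of the double ELBO property (concavity of $f_1$, concavity of $f_0$, the compatibility relation $f_0'(r)=-rf_1'(r)$, and convexity of the induced $G$) are all preserved under non-negative linear combinations, which is exactly what it means for the set to be a convex cone. Concretely, suppose $(f_1^{(i)}, f_0^{(i)})$ for $i=1,\dots,m$ each have the double ELBO property, and let $\lambda_i \ge 0$. Define $f_k \defeq \sum_i \lambda_i f_k^{(i)}$ for $k\in\{0,1\}$; I want to show $(f_1,f_0)$ again satisfies the three conditions of the Lemma.

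First I would handle the two easy conditions. Concavity is preserved under non-negative linear combinations, so both $f_1$ and $f_0$ are concave. The compatibility condition in Eq.~\ref{eq:f_cond} is linear in the pair: since $f_0'(r) = \sum_i \lambda_i (f_0^{(i)})'(r) = \sum_i \lambda_i \big(-r (f_1^{(i)})'(r)\big) = -r f_1'(r)$, it holds for the combination. So far this is immediate. The only point needing a moment's thought is that the set is nonempty and closed under positive scaling and addition — scaling is trivial (multiply all $\lambda_i$), and the zero cone element corresponds to the all-zero pair, which vacuously satisfies everything (with $G\equiv 0$, convex).

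The main obstacle is the third condition: convexity of $G(\mu) = \mu f_1(\mu/(1-\mu)) + (1-\mu) f_0(\mu/(1-\mu))$. This is where I would be careful, because $G$ is \emph{not} obviously additive across the pairs at first glance. The key observation, though, is that the map $(f_1, f_0) \mapsto G$ defined by item 3 of the Lemma is itself linear: for fixed $\mu$, $G(\mu)$ is a fixed linear functional of the values $f_1(\mu/(1-\mu))$ and $f_0(\mu/(1-\mu))$. Hence if $G^{(i)}$ denotes the function induced by $(f_1^{(i)}, f_0^{(i)})$, then the $G$ induced by $(f_1, f_0) = \sum_i \lambda_i (f_1^{(i)}, f_0^{(i)})$ is exactly $\sum_i \lambda_i G^{(i)}$. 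Each $G^{(i)}$ is convex by hypothesis, $\lambda_i \ge 0$, so $G = \sum_i \lambda_i G^{(i)}$ is convex. Therefore all three conditions hold, and by the Lemma $(f_1, f_0)$ induces a PSR with the double ELBO property.

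Putting it together: the set of double-ELBO pairs is nonempty (e.g.\ the logarithmic example constructed above, or the zero pair), closed under addition (each of the three conditions is preserved, using linearity of differentiation and of the $G$-map together with the fact that concavity and convexity survive non-negative sums), and closed under multiplication by non-negative scalars (same argument with a single term). Hence it is a convex cone, which is the claim. I expect no genuine difficulty beyond spelling out that the $(f_1,f_0)\mapsto G$ correspondence is linear; everything else is a one-line check.
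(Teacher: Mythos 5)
Your proof is correct and follows essentially the same route as the paper, which simply invokes the linearity of the compatibility relation (Eq.~\ref{eq:f_cond}) and of the correspondence $(f_1,f_0)\mapsto G$ (Eq.~\ref{eq:G_from_S}) together with the fact that concavity and convexity are preserved under non-negative linear combinations. You merely spell out the one-line argument in full detail; there is no gap.
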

This follows from the linearity of the relations Eq.~\ref{eq:f_cond} and
Eq.~\ref{eq:G_from_S}.

\section{Instances of Fully Variational NCE}

In this section we discuss several instances of
$J^{\alpha,\beta}_{\text{fvNCE}}$ for specific choices of $\alpha$ and
$\beta$. For easier identification of known frameworks we focus on normalized
model distributions $p_\theta$, but the extension to unnormalized models is
straightforward.

\subsection{Variational auto-encoders: $(\alpha,\beta)=(0,0)$}
\label{sec:J00}

We choose $(\alpha,\beta)=(0,0)$ in the 2-parameter family given in
Lemma~\ref{lem:family2}, i.e.\ $f_1(r)=\log r$ and $f_0(r)=-r$. The resulting
fully variational NCE objective therefore is given by
\begin{align}
  J^{0,0}_{\text{fvNCE}}(\theta,\!q_1,\!q_0) = \EE{(x,z)\sim p_{d,1}}{\log \Paren{ \frac{p_\theta(x,z)}{p_{n,1}(x,z)}}}
  \!-\! \EE{(x,z)\sim p_{n,0}}{\frac{p_\theta(x,z)}{p_{n,0}(x,z)}}\!.
  \label{eq:J00}
\end{align}
We first focus on the second term:
\begin{align}
  \EE{(x,z)\sim p_{n,0}}{\frac{p_\theta(x,z)}{p_{n,0}(x,z)}} = \sum_{x,z:p_{n,0}(x,z)>0} p_\theta(x,z) \le 1.
  \label{eq:J00_term2}
\end{align}
Now if
$\operatorname{supp}(p_\theta) \subseteq \operatorname{supp}(p_{n,0})$, then
the r.h.s.\ of Eq.~\ref{eq:J00_term2} is exactly~1, otherwise it is bounded
by~1 from above.\footnote{If we use unnormalized models $p_\theta^0$, then
  Eq.~\ref{eq:J00_term2} is bounded by $Z(\theta)$.} We assume that
$\operatorname{supp}(p_\theta) \subseteq \operatorname{supp}(p_{n,0})$, then
the last term in Eq.~\ref{eq:J00} is~1, and since
$\EE{x\sim p_d}{\log p_n(x)}$ is constant, we obtain
\begin{align}
  J^{0,0}_{\text{fvNCE}}(\theta,q_1,q_0) \doteq \EE{x\sim p_d, z\sim q_1(Z|x)}{\log \Paren{ \frac{p_\theta(x,z)}{q_1(z|x)}}}.
\end{align}
After factorizing $p_\theta(x,z)=p_\theta(x|z)p_Z(z)$ this can be identified
as the variational autoencoder loss (up to constants independent of
$\theta$ and $q_1$),
\begin{align}
  J^{0,0}_{\text{fvNCE}}(\theta,q_1)
  \doteq \underbrace{\EE{x\sim p_d}{\EE{z\sim q_1(Z|x)}{\log p_\theta(x|z)} - D_{KL}(q_1(Z|x)\|p_Z)}}_{\defeq J_{\text{VAE}}(\theta,q_1)}.
\end{align}
Thus, in this setting standard VAE training can be understood as
variance-reduced implementation of $J^{0,0}_{\text{fvNCE}}$ (since the
stochastic second term becomes a closed-form constant). If
$\operatorname{supp}(p_\theta) \not\subseteq \operatorname{supp}(p_{n,0})$,
then
\begin{align}
  -\EE{(x,z)\sim p_{n,0}}{\frac{p_\theta(x,z)}{p_{n,0}(x,z)}} \ge -1
\end{align}
and optimizing the VAE loss $J_{\text{VAE}}$ is maximizing a lower bound of
$J^{0,0}_{\text{fvNCE}}$. Now let $q_0(Z|X)$ be a deterministic encoder, i.e.\
$q_0(z|x)=\mathbf{1}[z=g_0(x)]$. In this setting
\begin{align}
  J^{0,0}_{\text{fvNCE}}(\theta,q_1,q_0) &\doteq J_{\text{VAE}}(\theta,q_1) - \EE{x\sim p_n}{\frac{p_\theta(x,g_0(x))}{p_{n}(x)}} \nonumber \\
  {} &= J_{\text{VAE}}(\theta,q_1) - \sum\nolimits_x p_\theta(x,g_0(x)).
\end{align}
Intuitively, $J^{0,0}_{\text{fvNCE}}$ aims to autoencode real data well, but
at the same time prefers poor reconstructions for arbitrary
inputs. $J^{0,0}_{\text{fvNCE}}$ uses importance weighting to estimate
$\sum_x p_\theta(x,g_0(x))$. This term only becomes relevant in the objective
if the two encoders $q_1$ and $q_0$ are tied in some way (otherwise
$g_0$ may map the input to a constant code that is unlikely to be sampled from
$q_1$).

It is interesting to note that deterministic (and tied) encoders yield
somewhat different objectives when comparing classical autoencoders, VAEs and
the fully variational NCE:
\begin{align}
  J_{\text{AE}}(\theta,g) &= \EE{x\sim p_d}{\log p_\theta(x | g(x))} \\
  J_{\text{VAE}}(\theta,g) &= J_{\text{AE}}(\theta,g) + \EE{x\sim p_d}{\log p_Z(g(x))} - \gamma \\
  J^{0,0}_{\text{fvNCE}}(\theta,g) &= J_{\text{VAE}}(\theta,g) - \sum\nolimits_x p_\theta(x,g(x)), \label{eq:rVAE}
\end{align}
where $\gamma := \max_z \log p_Z(z)$ is introduced to ensure
$\log p_Z(z) - \gamma \le 0,$\footnote{This is only necessary for continuous
  latent variables as pmf's are always in $[0,1]$.} which allows us to obtain
the following chain of inequalities,
\begin{align}
  J_{\text{AE}}(\theta,g) \ge J_{\text{VAE}}(\theta,g) \ge J^{0,0}_{\text{fvNCE}}(\theta,g).
\end{align}
$J^{0,0}_{\text{fvNCE}}$ can be also interpreted as a well-justified instance
of regularized autoencoders~\cite{ghosh2020variational}. When using tied
stochastic encoders $q_0=q_1$ satisfying
$\operatorname{supp}(p_\theta) \subseteq \operatorname{supp}(p_{n,0})$, using
the empirical version the 2nd expectation in Eq.~\ref{eq:J00} (instead of
dropping it due to being a constant) can be beneficial in scenarios explicitly
requiring poor reconstruction of certain inputs. The downside is a higher
variance in the empirical loss and its gradients. Overall, a variational
autoencoder can be generally understood as variance-reduced instance of fully
variational NCE.

\subsection{``Robustified'' VAEs: $(\alpha,\beta)=(0,1)$}
\label{sec:J01}

Now we consider the pair $f_1(r)=\log(1+r)$ and $f_0(r)=\log(1+r)-r$. We read
\begin{align}
  \begin{split}
    J^{0,1}_{\text{fvNCE}}(\theta,q_1,q_0) &= \EE{(x,z)\sim p_{d,1}}{\log\Paren{1 + \frac{p_\theta(x,z)}{p_{n,1}(x,z)}}} \\
    {} &+ \EE{(x,z)\sim p_{n,0}}{\log\Paren{ 1 + \frac{p_\theta(x,z)}{p_{n,0}(x,z)}} - \frac{p_\theta(x,z)}{p_{n,0}(x,z)} }.
  \end{split}
         \label{eq:J01}
\end{align}
We assume
$\operatorname{supp}(p_\theta) \!\subseteq\!\operatorname{supp}(p_{n,0})$,
then the 3rd term can be dropped (see Sec.~\ref{sec:J00}). With tied encoders
$q\!=\!q_1\!=\!q_0$ we arrive at a near-symmetric cost
\begin{align}
  &J^{0,1}_{\text{fvNCE}} \doteq \EE{(x,z)\sim p_{d,1}}{\log\Paren{1 \!+\! \frac{p_\theta(x,z)}{p_{n,1}(x,z)}}}
                           + \EE{(x,z)\sim p_{n,0}}{\log\Paren{ 1 \!+\! \frac{p_\theta(x,z)}{p_{n,0}(x,z)}} } \nonumber \\
  {} &= \EE{(x,z)\sim p_{d,1}}{\log\Paren{1 \!+\! \frac{p_\theta(x,z)}{p_n(x) q(z|x)}} }
       + \EE{(x,z)\sim p_{n,1}}{\log\Paren{1 \!+\! \frac{p_\theta(x,z)}{p_n(x) q(z|x)}} } \nonumber \\
  {} &= \EE{(x,z)\sim p_{d,1}}{\softplus(\Delta(x,z) } + \EE{(x,z)\sim p_{n,1}}{\softplus(\Delta(x,z) },
\end{align}
where we introduced the shorthand notation
$\Delta(x,z) = \log p_\theta(x,z) - \log p_n(x)-\log q(z|x)$. This lower
bound is tight if $q(z|x)=p_\theta(z|x)=p_\theta(x,z)/p_\theta(x)$. In this
case the ratio inside the log simplifies to
\begin{align}
  \frac{p_\theta(x,z)}{p_n(x) q(z|x)} = \frac{p_\theta(x,z) p_\theta(x)}{p_n(x) p_\theta(x,z)} = \frac{p_\theta(x)}{p_n(x)}
\end{align}
and $\Delta(x,z) = \log p_\theta(x) - \log p_n(x)$.  Note that
$\log p_n(x)$ is expected to be small for real samples $x$ and large for noise
samples. $J^{0,1}_{\text{fvNCE}}$ can be interpreted as a version of VAEs
aiming to reconstruct both real and noise samples well, but is based on a
robustified reconstruction error (but with different and sample dependent
truncation values for real and noise samples). In practice this cost appears
to behave similar to AEs and VAEs (see Sec.~\ref{sec:eval_alpha} and
Table~\ref{tab:alphas}).

\subsection{Weighted squared distance: $(\alpha,\beta)=(1,0)$}

As a last example we consider $f_1(r)=r$ and $f_0(r)=-r^2/2$:
\begin{align}
  \label{eq:J10}
  J^{1,0}_{\text{fvNCE}}(\theta,q_1,q_0) &= \EE{(x,z)\sim p_{d,1}}{\frac{p_\theta(x,z)}{p_{n,1}(x,z)}}
                                           \!-\! \frac{1}{2} \EE{(x,z)\sim p_{n,0}}{\Paren{ \frac{p_\theta(x,z)}{p_{n,0}(x,z)}}^2}
\end{align}
Note that the encoder $q_1$ cancels in the first term, as
\begin{align}
  \EE{(x,z)\sim p_{d,1}}{\frac{p_\theta(x,z)}{p_{n,1}(x,z)}} = \sum_{x,z} \frac{p_d(x) q_1(z|x) p_\theta(x,z)}{p_n(x) q_1(z|x)}
  = \EE{\substack{x\sim p_d\\z\sim p_Z}}{\frac{p_\theta(x|z)}{p_n(x)}}.
\end{align}
Therefore $q_1$ does not appear in the r.h.s.\ of Eq.~\ref{eq:J10} and can be
omitted. Further, the last term in $J^{1,0}_{\text{fvNCE}}$ is the (Neyman)
$\chi^2$-divergence between $p_\theta(X,Z)$ and $p_n(X)q_0(Z|X)$. After some
algebraic manipulations it can be shown that $J^{1,0}_{\text{fvNCE}}$ is (up
to constants) a weighted squared distance,
\begin{align}
  J^{1,0}_{\text{fvNCE}}(\theta,q_0) \doteq \frac{1}{2} \sum_{x,z} \frac{\Paren{ p_\theta(x,z) - p_{d,0}(x,z) }^2}{p_{n,0}(x,z)}.
\end{align}
Overall the aim is to minimize the weighted squared distance between the
generative joint model $p_\theta(X,Z)$ and the data-encoder induced one
$p_d(X)q(Z|X)$. In contrary to the setting where $\alpha=0$ (or
$\alpha$ is at least small) and therefore it is natural to model
$\log p_\theta$, it seems more natural to model $p_\theta$ directly (instead
of the log-likelihood) in Eq.~\ref{eq:J10}. Hence, the choice
$\alpha=1$ is connected to density ratio
estimation~\cite{sugiyama2012density,sugiyama2012density_book}, that typically
uses shallow mixture models to represent the density ratio
$p_\theta/p_n$. In fact, $J^{1,0}_{\text{fvNCE}}$ in Eq.~\ref{eq:J10} is
closely related to least-squares importance fitting~\cite{kanamori2009least}
when $q_0=q_1$.

\section{Numerical Experiments}

In this section we illustrate the difference in the behavior of several
instances of $J_{\text{fv-NCE}}^{\alpha,\beta}$---in particular in comparison
with classical autoencoders and VAEs---on toy examples.

\begin{figure}[t]
  \centering
  \subfigure[Test inputs]{\includegraphics[width=0.245\textwidth]{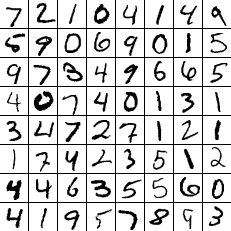}}
  \subfigure[Samples from $p_n$]{\includegraphics[width=0.245\textwidth]{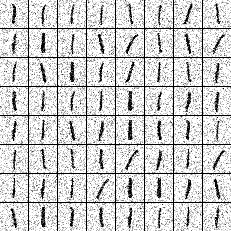}}
  \subfigure[VAE]{\includegraphics[width=0.245\textwidth]{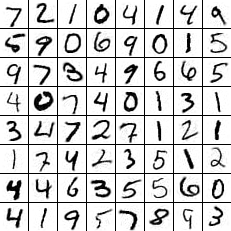}}
  \subfigure[$J_{\text{fv-NCE}}^{0,0}$]{\includegraphics[width=0.245\textwidth]{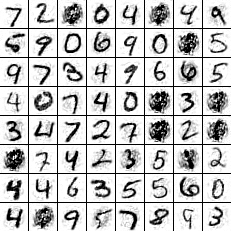}}
  \caption{
    The impact of the 2nd term in Eq.~\ref{eq:rVAE} on the reconstruction of test inputs (a).
    $p_n$ is chosen as a kernel-density estimator of several digits in a validation set showing ``1'', with samples shown in (b).
    Reconstructions of the inputs using a VAE-trained encoder-decoder are given in (c), and (d) shows 
    the corresponding reconstruction for a encoder-decoder trained using $J_{\text{fv-NCE}}^{0,0}$ (Eq.~\ref{eq:rVAE}).
    Input patches showing a ``1'' are poorly reconstructed (as intended).
  }
  \label{fig:VAE}
\end{figure}

\subsection{Noise-penalized variational autoencoders}

First, we demonstrate the capability to steer the behavior of an 784-256-784
autoencoder (with deterministic encoder) by using
$J_{\text{fv-NCE}}^{0,0}$ (Eq.~\ref{eq:rVAE}). The noise distribution
$p_n$ is a kernel density estimate of inputs depicting the digit ``1'' from a
validation set. Since the cost for false positives induced by
$-f_0^{0,0}=r$ is higher than the cost for false negatives
($-f_1^{0,0}(r)=-\log r$), anything resembling a digit ``1'' is expected to be
poorly reconstructed---even when those digits appear frequently in the
training data. Fig.~\ref{fig:VAE} visually verifies this on test inputs. This
feature of Eq.~\ref{eq:rVAE} is useful when training data for OOD detection is
contaminated by outliers, but a collection of outliers is available; or when
an autoencoder-based OOD detector is required to identify certain patterns as
OOD.

\subsection{Stronger noise penalization using $J_{\text{fv-NCE}}^{\alpha,0}$}
\label{sec:eval_alpha}

Since $f_0^{\alpha,0}$ penalizes false positives stronger than
$f_1^{\alpha,0}$ does for false negatives, we expect different solutions for
different choices of $\alpha$. With infinite data and correctly specified
models $\log p_\theta$, all PSRs will return the same solution (up to the
issue of local maxima), but we only have finite training data and clearly
underspecified models.

We fix the decoder variance to $\sigma^2_{\text{dec}}=1/8^2$ and use a kernel
density estimate with bandwidth
$\sigma_{\text{kde}}=2\sigma_{\text{dec}}$ as noise distribution
$p_n$. By setting $\alpha>0$, noise samples (which are near the training data
in this setting) force the model $p_\theta$ to explicitly concentrate on the
training data. Samples $x\sim p_n$ have a larger reconstruction error as
compared to the VAE setting ($\alpha=0$). Table~\ref{tab:alphas} lists average
decoding log-likelihoods for several values of $\alpha$. VAEs reconstruct
noise samples worse than standard autoencoders (AEs) due to their latent code
regularization. This behavior is generally amplified for increasing
$\alpha$, as the difference between the average reconstruction error grows
with $\alpha$. We also include $J^{0,1}_{\text{fvNCE}}$ (Sec.~\ref{sec:J01})
for reference, which behaves in practice similar to
VAEs. Fig.~\ref{fig:noise_reconstructions} visualizes the decreasing
reconstruction quality of samples drawn from $p_n$.

In order to avoid vanishing gradients when $\alpha>0$ in the initial training
phase, in view of Cor.~\ref{cor:closedness} we use actually a linear
combination of $J^{\alpha,0}_{\text{fvNCE}}$ (with weight $0.9$) and
$J^{0,0}_{\text{fvNCE}}$ (with weight $0.1$) as training loss.
Table~\ref{tab:alphas} lists the values for two ReLU-based MLP networks
(trained from the same random initial weights) obtained after 100
epochs. Since the log-ratios such as
$\log r = \log p_\theta(x,z) - \log p_{n,1}(x,z)$ can attain large magnitudes,
expressions such as $r^\alpha$ and $r^{\alpha+1}$ are evaluated using a
``clipped'' exponential function: we use the first-order approximation
$e^T(u-T+1)$ when $u>T$ for a threshold value $T$, which is chosen as
$T=10$ in our implementation.

\begin{table}[tb]
  \centering
  \small
  \subtable[784-128-784]{
    \begin{tabular}{cccc}
      Method/$(\alpha,\beta)$ & $x\sim p_d$ & $x\sim p_n$ & Difference \\ \hline
      AE & 766 & -1138 & 1904 \\
      VAE & 765 & -1609 & 2374 \\
      $(1/256,0)$ & 749 & -1665 & 2414 \\
      $(1/64,0)$ & 698 & -1863 & 2561 \\
      $(1/16,0)$ & 753 & -1818 & 2571 \\
      $(0,1)$ & 736 & -1656 & 2392
    \end{tabular}
  }
  \subtable[784-256-128-256-784]{
    \begin{tabular}{cccc}
      Method/$(\alpha,\beta)$ & $x\sim p_d$ & $x\sim p_n$ & Difference \\ \hline
      AE & 756 & -919 & 1675 \\
      VAE & 769 & -1373 & 2142 \\
      $(1/256,0)$ & 774 & -1402 & 2176 \\
      $(1/64,0)$ & 748 & -1520 & 2268 \\
      $(1/16,0)$ & 777 & -1463 & 2240 \\
      $(0,1)$ & 775 & -1372 & 2147
    \end{tabular}
  }
  \caption{Average log-likelihood $\log p_\theta(x|g(x))$ in nats. Higher values indicate lower reconstruction error.
  }
  \label{tab:alphas}
\end{table}

\begin{figure}[t]
  \centering
  \subfigure[Noise samples]{\includegraphics[width=0.245\textwidth]{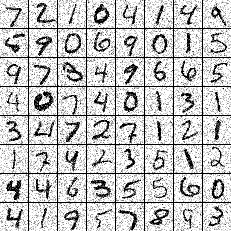}}
  \subfigure[AE: -1138 nats]{\includegraphics[width=0.245\textwidth]{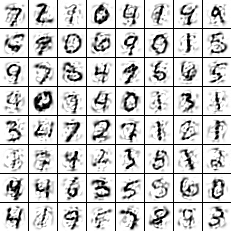}}
  \subfigure[VAE: -1609 nats]{\includegraphics[width=0.245\textwidth]{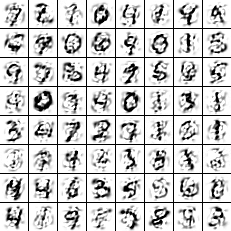}}
  \subfigure[$\alpha\!\!=\!\!\tfrac{1}{16}$: -1818 nats]{\includegraphics[width=0.245\textwidth]{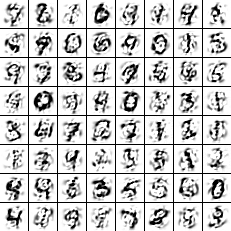}}
  \caption{Reconstruction of samples $x\sim p_n$ (a). VAEs (c) and
    $J^{\alpha,0}_{\text{fvNCE}}$ (d) increasingly force such samples to be
    poorly reconstructed compared to AEs (b), while maintaining a similar
    reconstruction error for training data $x\sim p_d$ (see Table~\ref{tab:alphas}). }
  \label{fig:noise_reconstructions}
\end{figure}

\section{Conclusion}

In this work we propose fully variational noise-contrastive estimation as a
tractable method to apply noise-contrastive estimation on latent variable
models. As with most variational inference methods, the resulting empirical
loss only needs samples from the data, noise and encoder distributions. We are
largely interested in the existence and basic properties of such framework and
unravel a connection with variational autoencoders. In light of this
connection, VAEs are now justified to be steered explicitly towards poorly
reconstructing samples from a user-specified noise distribution.

The utility of our framework for improved OOD detection and enabling general
energy-based decoder models is left as future work. Further, the highly
asymmetric nature of the classification loss suggests a potential but
yet-to-explore connection with one-class SVMs~\cite{scholkopf2001estimating}
and support vector data description~\cite{tax2004support}.

\paragraph{Acknowledgement}
This work was partially supported by the Wallenberg AI, Autonomous Systems and
Software Program (WASP) funded by the Knut and Alice Wallenberg Foundation.

\bibliographystyle{plain}
\bibliography{literature}

\begin{thebibliography}{10}

\bibitem{ceylan2018conditional}
Ciwan Ceylan and Michael~U Gutmann.
\newblock Conditional noise-contrastive estimation of unnormalised models.
\newblock In {\em International Conference on Machine Learning}, pages
  726--734. PMLR, 2018.

\bibitem{chen2016infogan}
Xi~Chen, Yan Duan, Rein Houthooft, John Schulman, Ilya Sutskever, and Pieter
  Abbeel.
\newblock Infogan: Interpretable representation learning by information
  maximizing generative adversarial nets.
\newblock {\em Advances in neural information processing systems}, 29, 2016.

\bibitem{dawid2014theory}
Alexander~Philip Dawid and Monica Musio.
\newblock Theory and applications of proper scoring rules.
\newblock {\em Metron}, 72(2):169--183, 2014.

\bibitem{dayan1995helmholtz}
Peter Dayan, Geoffrey~E Hinton, Radford~M Neal, and Richard~S Zemel.
\newblock The helmholtz machine.
\newblock {\em Neural computation}, 7(5):889--904, 1995.

\bibitem{dhariwal2021diffusion}
Prafulla Dhariwal and Alexander Nichol.
\newblock Diffusion models beat gans on image synthesis.
\newblock {\em Advances in Neural Information Processing Systems},
  34:8780--8794, 2021.

\bibitem{ghosh2020variational}
Partha Ghosh, Medhi~SM Sajjadi, Antonio Vergari, and Michael Black.
\newblock From variational to deterministic autoencoders.
\newblock In {\em 8th International Conference on Learning Representations},
  2020.

\bibitem{gneiting2007strictly}
Tilmann Gneiting and Adrian~E Raftery.
\newblock Strictly proper scoring rules, prediction, and estimation.
\newblock {\em Journal of the American statistical Association},
  102(477):359--378, 2007.

\bibitem{goodfellow2014generative}
Ian~J Goodfellow, Jean Pouget-Abadie, Mehdi Mirza, Bing Xu, David Warde-Farley,
  Sherjil Ozair, Aaron~C Courville, and Yoshua Bengio.
\newblock Generative adversarial nets.
\newblock In {\em NIPS}, 2014.

\bibitem{gutmann2010noise}
Michael Gutmann and Aapo Hyv{\"a}rinen.
\newblock Noise-contrastive estimation: A new estimation principle for
  unnormalized statistical models.
\newblock In {\em Proceedings of the Thirteenth International Conference on
  Artificial Intelligence and Statistics}, pages 297--304, 2010.

\bibitem{gutmann2012noise}
Michael~U Gutmann and Aapo Hyv{\"a}rinen.
\newblock Noise-contrastive estimation of unnormalized statistical models, with
  applications to natural image statistics.
\newblock {\em Journal of Machine Learning Research}, 13(Feb):307--361, 2012.

\bibitem{hyvarinen2005estimation}
Aapo Hyv{\"a}rinen.
\newblock Estimation of non-normalized statistical models by score matching.
\newblock {\em Journal of Machine Learning Research}, 6(4), 2005.

\bibitem{jordan1999introduction}
Michael~I Jordan, Zoubin Ghahramani, Tommi~S Jaakkola, and Lawrence~K Saul.
\newblock An introduction to variational methods for graphical models.
\newblock {\em Machine learning}, 37(2):183--233, 1999.

\bibitem{kanamori2009least}
Takafumi Kanamori, Shohei Hido, and Masashi Sugiyama.
\newblock A least-squares approach to direct importance estimation.
\newblock {\em The Journal of Machine Learning Research}, 10:1391--1445, 2009.

\bibitem{kingma2014auto}
Diederik~P Kingma and Max Welling.
\newblock Auto-encoding variational bayes.
\newblock In {\em Proceedings of the 2nd International Conference on Learning
  Representations (ICLR)}, 2014.

\bibitem{kingma2018glow}
Durk~P Kingma and Prafulla Dhariwal.
\newblock Glow: Generative flow with invertible 1x1 convolutions.
\newblock {\em Advances in neural information processing systems}, 31, 2018.

\bibitem{kirichenko2020normalizing}
Polina Kirichenko, Pavel Izmailov, and Andrew~G Wilson.
\newblock Why normalizing flows fail to detect out-of-distribution data.
\newblock {\em Advances in neural information processing systems},
  33:20578--20589, 2020.

\bibitem{liu2020energy}
Weitang Liu, Xiaoyun Wang, John Owens, and Yixuan Li.
\newblock Energy-based out-of-distribution detection.
\newblock {\em Advances in Neural Information Processing Systems},
  33:21464--21475, 2020.

\bibitem{mirza2014conditional}
Mehdi Mirza and Simon Osindero.
\newblock Conditional generative adversarial nets.
\newblock {\em arXiv preprint arXiv:1411.1784}, 2014.

\bibitem{oord2018representation}
Aaron van~den Oord, Yazhe Li, and Oriol Vinyals.
\newblock Representation learning with contrastive predictive coding.
\newblock {\em arXiv preprint arXiv:1807.03748}, 2018.

\bibitem{parry2012proper}
Matthew Parry, A~Philip Dawid, Steffen Lauritzen, et~al.
\newblock Proper local scoring rules.
\newblock {\em Annals of Statistics}, 40(1):561--592, 2012.

\bibitem{pihlaja2010family}
Miika Pihlaja, Michael Gutmann, and Aapo Hyv{\"a}rinen.
\newblock A family of computationally efficient and simple estimators for
  unnormalized statistical models.
\newblock In {\em Proceedings of the Twenty-Sixth Conference on Uncertainty in
  Artificial Intelligence}, pages 442--449, 2010.

\bibitem{radford2015unsupervised}
Alec Radford, Luke Metz, and Soumith Chintala.
\newblock Unsupervised representation learning with deep convolutional
  generative adversarial networks.
\newblock {\em arXiv preprint arXiv:1511.06434}, 2015.

\bibitem{ren2019likelihood}
Jie Ren, Peter~J Liu, Emily Fertig, Jasper Snoek, Ryan Poplin, Mark Depristo,
  Joshua Dillon, and Balaji Lakshminarayanan.
\newblock Likelihood ratios for out-of-distribution detection.
\newblock {\em Advances in neural information processing systems}, 32, 2019.

\bibitem{rezende2015variational}
Danilo Rezende and Shakir Mohamed.
\newblock Variational inference with normalizing flows.
\newblock In {\em International conference on machine learning}, pages
  1530--1538. PMLR, 2015.

\bibitem{rezende2014stochastic}
Danilo~J Rezende, Shakir Mohamed, and Daan Wierstra.
\newblock Stochastic backpropagation and approximate inference in deep
  generative models.
\newblock In {\em Proceedings of the 31st International Conference on Machine
  Learning (ICML-14)}, pages 1278--1286, 2014.

\bibitem{rhodes2019variational}
Benjamin Rhodes and Michael~U Gutmann.
\newblock Variational noise-contrastive estimation.
\newblock In {\em The 22nd International Conference on Artificial Intelligence
  and Statistics}, pages 2741--2750. PMLR, 2019.

\bibitem{scholkopf2001estimating}
Bernhard Sch{\"o}lkopf, John~C Platt, John Shawe-Taylor, Alex~J Smola, and
  Robert~C Williamson.
\newblock Estimating the support of a high-dimensional distribution.
\newblock {\em Neural computation}, 13(7):1443--1471, 2001.

\bibitem{song2020score}
Yang Song, Jascha Sohl-Dickstein, Diederik~P Kingma, Abhishek Kumar, Stefano
  Ermon, and Ben Poole.
\newblock Score-based generative modeling through stochastic differential
  equations.
\newblock In {\em International Conference on Learning Representations}, 2020.

\bibitem{sugiyama2012density_book}
Masashi Sugiyama, Taiji Suzuki, and Takafumi Kanamori.
\newblock {\em Density ratio estimation in machine learning}.
\newblock Cambridge University Press, 2012.

\bibitem{sugiyama2012density}
Masashi Sugiyama, Taiji Suzuki, and Takafumi Kanamori.
\newblock Density-ratio matching under the bregman divergence: a unified
  framework of density-ratio estimation.
\newblock {\em Annals of the Institute of Statistical Mathematics},
  64(5):1009--1044, 2012.

\bibitem{tax2004support}
David~MJ Tax and Robert~PW Duin.
\newblock Support vector data description.
\newblock {\em Machine learning}, 54(1):45--66, 2004.

\bibitem{zenati2018adversarially}
Houssam Zenati, Manon Romain, Chuan-Sheng Foo, Bruno Lecouat, and Vijay
  Chandrasekhar.
\newblock Adversarially learned anomaly detection.
\newblock In {\em 2018 IEEE International conference on data mining (ICDM)},
  pages 727--736. IEEE, 2018.

\end{thebibliography}

\end{document}